\documentclass[twoside]{article}

\newcommand{\eqcontrib}{\textsuperscript{*}}

\usepackage[accepted]{aistats2026}

\usepackage{pifont}
\usepackage{lipsum}
\usepackage{graphicx}
\usepackage{algorithm}
\usepackage{algorithmic}
\usepackage{amsthm}
\usepackage{hyperref}
\hypersetup{ hidelinks }
\usepackage{amsmath}
\usepackage{amssymb}
\usepackage{xfrac}
\usepackage{mathtools}
\usepackage{tikz}
\usepackage{xcolor}
\usepackage{thm-restate}
\usepackage{enumitem}
\usetikzlibrary{positioning}
\usepackage{wrapfig}
\usepackage{natbib}

\definecolor{darkgreen}{rgb}{0.0, 0.5, 0.0}

\newcommand{\cO}{\mathcal{O}}
\newcommand{\cOO}[1]{\mathcal{O}\left( #1 \right)}

\newcommand{\Ceil}[1]{\lceil#1\rceil}

\renewcommand{\b}{\boldsymbol{\beta}}
\newcommand{\bt}{\b_{t}}

\newcommand{\cV}{\mathcal{V}}
\newcommand{\cB}{\mathcal{B}}

\newcommand{\OCO}{\textsc{OCO}}

\newcommand{\For}[2]{
  \FOR{#1}
  #2
  \ENDFOR
}

\usepackage{tikz}

\usepackage{stmaryrd}

\newcommand*\ie{{i.e.}} %

\newcommand{\bx}{\boldsymbol{x}}

\newcommand{\zeros}{\boldsymbol{0}}
\renewcommand{\hat}{\widehat}

\newcommand{\Log}[1]{\log\left(#1\right)}

\newcommand{\cA}{\mathcal{A}}
\newcommand{\cS}{\mathcal{S}}

\newcommand{\minOp}{\wedge}

\newcommand{\inner}[1]{\left\langle#1\right\rangle}
\newcommand{\Set}[1]{\left\{#1\right\}}

\newcommand{\tmm}{{t-1}}
\newcommand{\norm}[1]{\left\|#1\right\|}
\newcommand{\grad}{\nabla}
\newcommand{\abs}[1]{\left|#1\right|}
\newcommand{\sbrac}[1]{\left[#1\right]}

\newcommand{\brac}[1]{\left(#1\right)}
\newcommand{\sumtT}{\sum_{t=1}^{T}}
\newcommand{\cmp}{\boldsymbol{u}}
\newcommand{\w}{\boldsymbol{w}} %
\newcommand{\wt}{\w_{t}}

\newcommand{\bgt}{\boldsymbol{\ell}_{t}}
\newcommand{\EE}[1]{\mathbb{E}\sbrac{#1}}
\newcommand{\eg}{{e.g.}}

\newcommand{\Indicator}[1]{\mathbb{I}\Set{#1}}

\renewcommand{\v}{v}
\newcommand{\vt}{\v_{t}}
\newcommand{\cW}{\R^d}

\newcommand{\Ellhat}{\hat{\Ell}}
\newcommand{\Ell}{\boldsymbol{\ell}}

\newcommand{\g}{\ell}

\newcommand{\bghat}{\boldsymbol{\hat{\g}}}

\renewcommand{\tilde}{\widetilde}

\newcommand{\R}{\mathbb{R}}
\newcommand{\E}{\mathbb{E}}

\newcommand{\bs}[1]{\boldsymbol{#1}}

\usepackage[capitalize,noabbrev]{cleveref}

\begin{document}

\twocolumn[
\aistatstitle{Parameter-Free Dynamic Regret for Unconstrained Linear Bandits}

\newcommand{\unimi}{Universit\`{a} degli Studi di Milano}
\newcommand{\polimi}{Politecnico di Milano}
\newcommand{\intesa}{Intesa Sanpaolo Innovation Center}
\newcommand{\intesaai}{Intesa Sanpaolo AI Research}

\runningauthor{Alberto Rumi, Andrew Jacobsen, Nicol\`{o} Cesa-Bianchi, Fabio Vitale}

\aistatsauthor{ Alberto Rumi\eqcontrib\\\intesaai \And Andrew Jacobsen\eqcontrib\\\unimi,\\\polimi \AND Nicol\`{o} Cesa-Bianchi\\\unimi,\\\polimi\And Fabio Vitale\\ \intesa }

\aistatsaddress{}
]

\newcommand{\SecIntro}{Introduction}
\newcommand{\SecContributions}{Our Contributions}
\newcommand{\SecRelatedWorks}{Related Works}
\newcommand{\SecUniformSampling}{Combining Guarantees via Uniform Sampling}
\newcommand{\SecLinearBandits}{Linear Bandits}
\newcommand{\SecObliviousness}{On the Role of Obliviousness}
\newcommand{\SecImplications}{Relation to $S_T\sqrt{T}$ Lower Bounds}
\newcommand{\SecDiscussion}{Discussion}

\begin{abstract}
We study dynamic regret minimization in unconstrained adversarial linear bandit problems. In this setting, a learner must minimize the cumulative loss relative to an arbitrary sequence of comparators $\cmp_1,\ldots,\cmp_T$ in $\R^d$, but receives only \emph{point-evaluation feedback} on each round.
We provide a simple approach to combining the guarantees of several bandit algorithms, allowing us to optimally adapt to the number of switches $S_T = \sum_t\Indicator{\cmp_t \neq \cmp_\tmm}$ of an arbitrary comparator sequence.
In particular, we provide the \emph{first} algorithm for linear bandits achieving the optimal regret guarantee of order $\cO\big(\sqrt{d(1+S_T) T}\big)$ up to poly-logarithmic terms \emph{without prior knowledge of $S_T$}, thus resolving a long-standing open problem.
\end{abstract}

\section{\SecIntro}%
\label{sec:intro}

\newcommand{\ft}{f_{t}}

Online learning is a framework that models sequenctial decision-making against adversarial environments \citep{shalev2012online,orabona2019modern}. 
Formally, learning is modelled as a game played over a convex decision set $\mathcal{W}\subseteq\R^d$ for $T$ rounds of interaction. 
On each round $t\in[T]$, the learner chooses $\wt\in\mathcal{W}$, the adversary chooses chooses a loss function $\ft:\mathcal{W}\to\R$,
and the learner incurs loss $\ft(\wt)$. We focus in particular on the \emph{unconstrained linear bandit} setting, wherein $\mathcal{W}=\R^d$, the losses are linear functions $\ft(\w)=\inner{\Ell_t,\w}$ for some $\Ell_t\in\R^d$, 
and on each round the learner observes only the \emph{scalar} output of their decision, $\inner{\Ell_t,\wt}$. 

The classical way to evaluate the performance of online algorithms is through the notion of \textit{regret}, which compares the learner's expected cumulative loss to that of the best fixed strategy chosen in hindsight. While this can be a natural benchmark in some settings, it implicitly assumes that the environment is \emph{stationary}, \ie{}, that the optimal action does not change over time. In many practical scenarios, however, this can be unrealistic: user preferences, market conditions, or system dynamics can evolve unpredictably over time,
leading to situations where minimizing regret against a fixed comparator becomes meaningless. 
Instead, in this paper we focus on the more general notion of \emph{dynamic regret}, in which the learner's performance is evaluated relative to an arbirary
comparator \emph{sequence} $\cmp_1,\ldots,\cmp_T$ in $\R^d$:
\[
\EE{R_T(\cmp_{1},\ldots,\cmp_T)} = \EE{\sum_{t \in [T]} \inner{\Ell_t, \wt-\cmp_t}}\,,
\]
The standard notion of (static) regret is recovered when $\cmp_1 = \cdots = \cmp_T$. In this work, we consider a partially-oblivious model in which the comparator sequence is 
chosen obliviously at the start of the game, while the loss sequence is free to be chosen adaptively based on the learner's past decisions.

Notice that the ``complexity'' of the comparator sequence contributes to characterizing the difficulty of a given problem. If the comparator is allowed to change arbitrarily on each round, there is
no hope to achieve sublinear dynamic regret since the comparator sequence can ensure $\sumtT \ell_t(\cmp_t)=0$ even against completely unpredictable losses. On the other hand, we know that in the special case $\cmp_1=\cdots=\cmp_T$
we can ensure sublinear regret, since this is just the usual linear bandits setting. Thus, we are typically interested in algorithms which make dynamic regret guarantees that gracefully \emph{adapt} to some measure of complexity or \emph{variability} of the comparator sequence.
The classic way to quantify the variability of the  comparator sequence is via its \emph{path-length}:
\[
P_T = \sum_{t = 2}^T \norm{\cmp_{\tmm} - \cmp_{t}}\,,
\]
which offers a measure of cumulative drift of the comparator sequence.
A related measure of variability is the \emph{switching metric}, which more coarsely measures the number of times that the comparator changes over time:
$$
S_T = \sum_{t = 2}^T \mathbb{I}[\cmp_t \neq \cmp_{t-1}]~.
$$
While dynamic or switching regret provides a more realistic benchmark for non-stationary environments, it introduces significant challenges. Notably, achieving regret bounds that scale optimally with the comparator variability in bandit settings 
has previously only been achieved using preliminary knowledge of the comparator variability or a known upper bound on it \citep{agarwal2017corralling,marinov2021pareto,luo2022corralling}, which is prior knowledge which practitioners rarely have 
in practice.
Parameter-free methods which eliminate the need for prior knowledge of the comparator sequence are relatively well understood in the full-information setting \citep{zhang2018adaptive, cutkosky2020parameter,jacobsen2022parameter,jacobsen2023unconstrained,zhang2023unconstrained,jacobsen2024online}, but no prior works have successfully achieved parameter-free dynamic regret guarantees in the 
adversarial bandit setting.
Under bandit feedback, many of the parameter-free techniques developed for the full information setting are either not directly applicable or result in a suboptimal dependence on the time horizon and path length.

\subsection{\SecContributions}
We obtain optimal dynamic regret guarantees of order $\tilde\cO\big(\sqrt{d(1 + S_T)T}\big)$ against an adaptive adversary without prior knowledge of the number of changes $S_T$, 
resolving a long-standing open problem \citep{marinov2021pareto,luo2022corralling,auer2002nonstochastic}.
Key to obtaining this result is a technique for combining the guarantees of \emph{comparator-adaptive} base algorithms
inspired by a clever result in the full information setting \citep{cutkosky2019combining}, which we adapt to bandit feedback via a sampling trick. This simple trick enables us to easily combine the outputs of several bandit algorithms to achieve the best of their respective dynamic regret guarantees, effectively enabling us to ``tune'' hyperparameters on-the-fly. Such hyperparameter tuning arguments have been attempted by several prior works using sophisticated mixture-of-experts style arguments \citep{agarwal2017corralling,marinov2021pareto,luo2022corralling}, but have only achieved the optimal $\sqrt{S_T}$ dependence by leveraging \emph{a priori} knowledge of $S_T$. We expect that our approach might also find applications in other settings where bandit-over-bandit ensemble strategies have failed in the past.

\subsection{\SecRelatedWorks}

\textbf{Dynamic regret in online learning. }The study of dynamic regret was
initiated by \citet{HerbsterW98b,HerbsterW01} in the online regression and classification settings.
These results were first extended to the more general setting of online convex optimization (OCO) in the seminal work of \citet{zinkevich2003online}, where it was shown that online gradient descent achieves a bound of order $\cO\big((1+P_T)\sqrt{T}\big)$. \citet{yang2016tracking} improved the rate to $\cO(\sqrt{(1+P_T) T})$ by leveraging prior knowledge of $P_T$, and this bound was later shown to be minimax optimal by \citet{zhang2018adaptive}, who also provided the first algorithm achieving a matching upper bound without prior knowledge of $P_T$. 

Since then, several works have achieved generalizations of the $\cO(\sqrt{(1+P_T) T})$ guarantee by incorporating more adaptive, data-dependent quantities. These refinements typically replace the dependence on $T$ dependence with terms like the cumulative sum of the squared gradient norms $\sum_t\norm{\grad\ell_t(\wt)}^2$, or the temporal variation of the loss $\sum_t\sup_{\bx}\abs{\ell_t(\bx)-\ell_{t-1}(\bx)}$ \citep{cutkosky2020parameter,campolongo2021closer}. Significant effort in recent years has also been dedicated to also removing boundedness assumptions on the domain $\cW$ to adapt automatically to maximum comparator norm $\max_t\norm{\cmp_t}$ \citep{jacobsen2022parameter,jacobsen2023unconstrained,zhang2023unconstrained}. Various improvements in adaptivity can also be obtained under additional assumptions on the losses such as smoothness \citep{mokhtari2016online,zhao2020dynamic,jacobsen2024online,zhao2024adaptivity,jacobsen2025dynamic}.

\textbf{Comparator-adaptive methods. }Key to our results is the notion of \emph{comparator adaptive} online learning, where the goal is to design algorithms that adapt to the complexity of the comparator sequence (e.g., its maximum norm or a measure of its variability) without requiring prior knowledge about it. In the static comparator setting, this idea has been extensively studied under full-information feedback, where the optimal guarantee $R_T(\cmp)= O(\norm{\cmp}\sqrt{T})$, for any $\cmp\in \R^d$, can be obtained up to logarithmic terms \citep{mcmahan2012noregret,mcmahan2014unconstrained,orabona2016coin, cutkosky2018black}. Notably, \cite{cutkosky2019combining} showed that algorithms making comparator-adaptive guarantees can be easily combined, obtaining regret proportional to the best among them; this observation will be crucial to our approach in \Cref{sec:uniform}. In both linear and convex settings with bandit feedback, comparator-adaptive bounds were studied by \citet{van2020comparator} where they consider static regret and propose a black-box reduction approach, taking inspiration from the full-information reduction of \citet{cutkosky2018black}.

\textbf{Dynamic regret with bandit feedback. }In the bandit setting, the study of the closely-related notion of switching regret was initiated by \citet{auer2002using}, where a bound of $\cO(\sqrt{(1+S_T)T})$ was obtained using prior knowledge of the number of switches $S_T$.
The optimal regret bound for the non-stationary \emph{stochastic} bandit setting without \textit{a priori} variational knowledge was first obtained in \cite{auer2018adaptively}. %
Interestingly, \cite{marinov2021pareto} showed the impossibility of obtaining a $\cO(\sqrt{(1+S_T)T})$ bound against \textit{adaptive} adversaries in settings with finite policy classes, leaving the question open for unconstrained and oblivious settings.

More broadly, ensemble and meta-algorithm strategies have been proposed to adapt to unknown environment parameters in bandits. 
However, naive bandit-over-bandit schemes tend to incur significant overhead; for instance, running \textsc{EXP4} \citep{auer2002nonstochastic} as a meta-algorithm over \textsc{EXP3} base learners would yield $\cOO{T^{\sfrac{2}{3}}}$ regret due to the extra exploration needed \citep{odalric2011adaptive, cheung2019learning}. While these methods are flexible, they often pay a price in terms of worse regret bounds or higher variance, particularly in dynamic or tuning-free scenarios.
A notable example that avoids some of these issues is the \textsc{CORRAL} framework and its extensions \citep{agarwal2017corralling, luo2022corralling,marinov2021pareto}, which leverages a carefully-chosen sequence of regularizers at the meta-algorithm
level which is used to cancel out some of the additional variance at the base-algorithm level.
However, in the context of adapting to a dynamic comparator sequence, this approach 
still fails to obtain  $\sqrt{(1+S_T)T}$ regret unless $S_T$ is known \emph{a priori}, since the hyperparameters of the 
meta-algorithm itself need to be set with knowledge of $S_T$.

\paragraph{Notation.}\label{sec:assumptions}
Without further specifications, the notation $\norm{\cdot}$ refers to the Euclidean norm. The dual norm of $\w$ is denoted by $\norm{\w}_* = \sup_{\Ell : \norm{\Ell} \le 1} \inner{\Ell,\w}$. Given a norm $\norm{\cdot}$ and $\rho \ge 0$, we denote by $\cB_\rho := \Set{\bs{x} \,:\, \norm{\bs{x}} \le \rho}$ the closed ball of radius $\rho$, or the unit ball when the radius is not specified. 
The notation $\cO(\cdot)$ hides constant factors and $\tilde \cO(\cdot)$ hides constant and logarithmic factors.

\section{\SecUniformSampling}%
\label{sec:uniform}

Our approach is inspired by a framework for combining guarantees of  online learning algorithms proposed by \citet{cutkosky2019combining} for the simpler Online Covex Optimization (\OCO) setting. The framework applies to a class of algorithms which are \emph{adaptive} to the norm of an arbitrary comparator to obtain $\R_T(\cmp)\le \tilde O(\norm{\cmp}\sqrt{T})$, uniformly over all $\cmp\in\R^d$ \emph{simultaneously} \citep{mcmahan2013minimax,mcmahan2014unconstrained,orabona2016coin,cutkosky2018black}.  
One of the key properties that characterizes these comparator-adaptive guarantees is the fact the regret at the origin is constant, i.e., $R_T(\zeros)=\cO(1)$.

The key insight from \citet{cutkosky2019combining} is that if we have $N$
such \OCO\ algorithms $\cA_{1},\ldots,\cA_{N}$, we can achieve a regret competitive with the best guarantee among them,  $R_{T}(\cmp) = \cO(\min_{i}R_{T}^{\cA_{i}}(\cmp))$, by
simply
adding the iterates together. To see why, let $\wt^{(i)}$ denote the output of
$\cA_{i}$ on round $t$, and observe that if we play
$\wt=\sum_{i\in[N]}\wt^{(i)}$, then for any $j\in[N]$ we have
\begin{align*}
  R_{T}(\cmp)&:=\sumtT \inner{\bgt, \wt-\cmp} \\
  &= \sumtT \inner{\bgt, \wt^{(j)}-\cmp} + \sum_{i\ne j}\sumtT \inner{\bgt, \wt^{(i)}}\\
            &= R_{T}^{\cA_{j}}(\cmp)+\sum_{i\ne j}R_{T}^{\cA_{i}}(\zeros)
              \\
              &=\cO\left(R_{T}^{\cA_{j}}(\cmp)
   + N\right),
\end{align*}
where the last step holds because each algorithm guarantees $R_{T}^{\cA_{i}}(\zeros)=\cO(1)$. Moreover, since this holds for any $j\in[N]$, it must hold for the best among them.

While the iterate-adding approach is elegant in the full-information \OCO\ setting, it is not directly applicable under bandit feedback.
The fundamental issue is that when playing the sum
$\wt=\sum_{i\in[N]}\wt^{(i)}$, the feedback observed is only the total feedback 
$\langle\bgt, \sum_{i}\wt^{(i)}\rangle$, making it difficult to give proper feedback 
to any individual learner since the observed feedback includes 
includes contributions from all other learners' decisions.
Instead, we make the following observation: if on each round we sample one of the algorithms
uniformly at random and play \emph{only its action} on round $t$, then \emph{in expectation}, this is equivalent to playing $\sum_{i}\wt^{(i)}/N$. As a result, we
can still apply \emph{nearly} the same iterate-adding argument outlined above,
but we can now accurately assign feedback
since only one learner's action is played on each round.
The main difference from the iterate-adding approach is that
we need to rescale the comparator to account for the $1/N$ factor
that shows up in $\sum_{i}\wt^{(i)}/N$.

\begin{algorithm}[t]
\begin{algorithmic}
\STATE \textbf{Input:} Base algorithms $\big(\cA_n\big)_{n=1}^{N}$
\For{$t=1,\ldots,T$}{
\STATE Get $\wt^{(i)}$ from $\cA_i$ for all $i\in[N]$
  \STATE Sample $I_{t}\sim \text{Uniform}(N)$
  and play $\wt=\wt^{(I_{t})}$
  \STATE Receive feedback $\phi(\wt,\ell_t)$
  \STATE Send $\bghat_t^{(i)}=\phi(\wt,\ell_t)\Indicator{I_t=i}$ to $\cA_i$ for $i\in[N]$
}
\end{algorithmic}
\caption{Uniform Sampling Interface}\label{alg:uniform-interface}
\end{algorithm}
The procedure described above is summarized in \Cref{alg:uniform-interface} for a generic feedback oracle $\phi$, which returns a feedback signal $\phi(\wt,\ell_t)$ given the decision $\wt$. This captures first-order feedback when $\phi(\wt,\ell_t) = \bgt$ and bandit feedback when $\phi(\wt,\ell_t) = \ell_t(\wt)$. %
To build intuition, we begin with a warm-up result that analyzes the performance of this strategy in the first-order feedback setting.
\begin{restatable}{proposition}{UniformCombiningOLO}
\label{prop:uniform-combining-olo}
Let $\cA_{1},\ldots,\cA_{N}$ be online learning algorithms and let $\wt^{(i)}$
denote the output of $\cA_{i}$ on round $t$. Suppose that for
all $i$, $\cA_{i}$ guarantees
$R_{T}^{\cA_{i}}(\zeros)=\sumtT \ft(\wt^{(i)})-\ft(\zeros)\le G\epsilon$ for
any sequence of $G$-Lipschitz linear loss functions $f_1,\ldots,f_T$.
Then for any sequence $\cmp_{1},\ldots,\cmp_{T}$ in $\cW$,
\Cref{alg:uniform-interface} guarantees
    \begin{align*}
        &\EE{R_T(\cmp_{1},\ldots,\cmp_T)} \le\\
        &\qquad
        (N-1)\epsilon G + \min_{n\in[N]}\EE{R_T^{\cA_n}(N\cmp_1,\ldots,N\cmp_T)}
    \end{align*}
    where $R_T^{\cA_n}(N\cmp_1,\ldots,N\cmp_T)$ denotes the dynamic regret of $\cA_n$ played against losses $\bghat_t^{(n)}=\Ell_t\mathbb{I}\Set{I_t=n}$.
\end{restatable}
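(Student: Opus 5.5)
The plan is to mirror the iterate-adding argument. The uniform sampling makes the played point equal, in conditional expectation, to the average $\frac1N\sum_i \wt^{(i)}$. We then rewrite the regret against $\cmp_t$ as $\frac1N$ times a sum of per-learner regrets against the rescaled comparators $N\cmp_t$, with the importance-weighted losses $\bghat_t^{(i)}=\bgt\Indicator{I_t=i}$.

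First, conditional expectations. Let $\mathcal{F}_{t-1}$ be the history up to the start of round $t$. Then $\wt^{(i)}$ for every $i$, and $\bgt$, are $\mathcal{F}_{t-1}$-measurable: the losses may be adaptive, but they are chosen before $I_t$ is drawn, and the comparators are oblivious. Since $I_t$ is uniform and independent of $\mathcal{F}_{t-1}$,
\begin{align*}
\EE{\inner{\bgt,\wt-\cmp_t}\mid\mathcal{F}_{t-1}}
&=\frac1N\sum_{i}\inner{\bgt,\wt^{(i)}}-\inner{\bgt,\cmp_t}\\
&=\frac1N\sum_i\EE{\inner{\bghat_t^{(i)},\wt^{(i)}}\mid\mathcal{F}_{t-1}}-\frac1N\EE{\inner{\bghat_t^{(n)},N\cmp_t}\mid\mathcal{F}_{t-1}}.
\end{align*}
The second line holds for any fixed $n$ because $\EE{\bghat_t^{(i)}\mid\mathcal{F}_{t-1}}=\bgt/N$. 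Summing over $t$ and applying the tower rule gives
\[
\EE{R_T(\cmp_1,\ldots,\cmp_T)}=\frac1N\EE{R_T^{\cA_n}(N\cmp_1,\ldots,N\cmp_T)}+\frac1N\sum_{i\ne n}\EE{\sumtT\inner{\bghat_t^{(i)},\wt^{(i)}}}.
\]

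Next, bound the second term. Each $\bghat_t^{(i)}$ satisfies $\norm{\bghat_t^{(i)}}\le\norm{\bgt}\le G$, so the losses fed to $\cA_i$ form a legitimate sequence of $G$-Lipschitz linear functions. These losses are adaptive, but they are a function of the history, so the assumed guarantee applies pathwise. Hence $\sumtT\inner{\bghat_t^{(i)},\wt^{(i)}}=R_T^{\cA_i}(\zeros)\le G\epsilon$, and the $N-1$ terms with $i\ne n$ contribute at most $(N-1)G\epsilon/N$.

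The resulting bound is $\frac1N\EE{R^{\cA_n}_T(N\cmp)}+\frac{N-1}{N}\epsilon G$, which is at most the stated bound whenever $\EE{R^{\cA_n}_T}\ge 0$. The stated form without the $1/N$ factor is weaker, so to match it I would either keep the sharper bound or note the following. If $\EE{R^{\cA_n}_T}<0$, then $\frac1N\EE{R^{\cA_n}_T}\le (N-1)\epsilon G\cdot 0 + \ldots$ requires care. A cleaner route is to skip the division: write the played regret directly as a sum that does not introduce $1/N$. The main obstacle is exactly this bookkeeping of the $1/N$ factor against a possibly negative regret.

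I would resolve it as follows. Since $\frac1N x\le x$ for $x\ge0$, the only bad case is $x<0$, and there $\frac1N x\le 0\le(N-1)\epsilon G\cdot\frac{N-1}{N}+\ldots$ does not directly dominate $x$. So I would present the sharper identity-based bound and state that it implies the proposition's form whenever the relevant regret is nonnegative. Alternatively, I would check whether the intended normalization in the paper is per-learner regret measured as $N$ times the average, in which case the factors match exactly. Finally, I would take the minimum over $n$, since $n$ was arbitrary.
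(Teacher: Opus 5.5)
Your setup (the filtration, the measurability of $\bgt$, $\wt^{(i)}$ and $\cmp_t$ before $I_t$ is drawn, the pathwise use of the base guarantee on $\bghat_t^{(i)}$, and minimizing over $n$ at the end) matches the paper's. However, your central identity is off by a factor of $N$, and the sign-case discussion that follows only addresses an artifact of that error. In your display the second line is not equal to the first. Since $\EE{\inner{\bghat_t^{(i)},\wt^{(i)}}\mid\mathcal{F}_{t-1}}=\frac1N\inner{\bgt,\wt^{(i)}}$ and $\EE{\inner{\bghat_t^{(n)},N\cmp_t}\mid\mathcal{F}_{t-1}}=\inner{\bgt,\cmp_t}$, your right-hand side equals $\frac{1}{N^2}\sum_i\inner{\bgt,\wt^{(i)}}-\frac1N\inner{\bgt,\cmp_t}$. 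That is $\frac1N$ times the left-hand side.

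Consequently your ``sharper'' bound $\EE{R_T}\le\frac1N\EE{R_T^{\cA_n}(N\cmp_1,\ldots,N\cmp_T)}+\frac{N-1}{N}\epsilon G$ is false for $N\ge 2$. For a counterexample, let every base learner play $\zeros$, so $R_T^{\cA_i}(\zeros)=0\le G\epsilon$. Take a fixed loss $\bgt=\boldsymbol{g}$ with $\norm{\boldsymbol{g}}=G$ and $\cmp_t=-c\boldsymbol{g}/G$. Then $\EE{R_T}=\EE{R_T^{\cA_n}(N\cmp_1,\ldots,N\cmp_T)}=cGT$, and your bound would force $cT\le\epsilon$. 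So the proposal as written does not prove the proposition: its key premise is false, and it explicitly leaves the case $\EE{R_T^{\cA_n}}<0$ unresolved.

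The fix is the route you mention and then abandon (``skip the division''), which is exactly the paper's proof. Do not take a conditional expectation of the learner's term at all. Since $\wt=\sum_i\Indicator{I_t=i}\wt^{(i)}$ pathwise, the identity $\inner{\bgt,\wt}=\sum_{i=1}^N\inner{\bghat_t^{(i)},\wt^{(i)}}$ holds exactly. Only the comparator term uses the sampling, via $\EE{N\Indicator{I_t=n}\mid\mathcal{F}_{t-1}}=1$, which gives $\EE{\inner{\bgt,\cmp_t}}=\EE{\inner{\bghat_t^{(n)},N\cmp_t}}$. Summing over $t$,
\[
\EE{R_T(\cmp_1,\ldots,\cmp_T)}=\EE{R_T^{\cA_n}(N\cmp_1,\ldots,N\cmp_T)}+\sum_{i\ne n}\EE{\sumtT\inner{\bghat_t^{(i)},\wt^{(i)}}}.
\]
Your pathwise bound $G\epsilon$ on each of the $N-1$ remaining terms then gives the stated inequality directly. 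No sign condition on $\EE{R_T^{\cA_n}}$ is needed and no $1/N$ prefactor appears; the $1/N$ you were tracking is absorbed into the comparator rescaling $\cmp_t\mapsto N\cmp_t$.
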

\begin{proof}
For all $t \ge 1$, let $\mathcal{F}_\tmm$ be the $\sigma$-algebra generated by the history up to the start of round $t$.
Let $\bghat_t^{(n)} = \Indicator{I_t = n}\bgt$, and note that $\wt = \sum_i\Indicator{I_t = i}\wt^{(i)}$.
Observe that for any $n \in [N]$ we can bound the
expected dynamic regret  by
\allowdisplaybreaks
\begin{align*}
    &\notag\E\Big[R_T(\cmp_{1},\ldots,\cmp_{T})\Big]\notag
= \EE{\sumtT \inner{\bgt, \wt-\cmp_t}}
\\&\quad=\E\Bigg[\sumtT \Big\langle\bgt, \sum_{i=1}^N \Indicator{I_t = i}\wt^{(i)}\Big\rangle\Bigg]
\\&\quad\notag \quad- \E\Bigg[\sumtT \inner{\bgt,\cmp_t}\underbrace{\E\Big[N\,\Indicator{I_t = n} \,\Big\vert\, \mathcal{F}_{t-1}\Big]}_{=1} \Bigg]
\\&\quad\notag= \EE{\sum_{i=1}^N \sumtT \inner{\bghat_t^{(i)}, \wt^{(i)}}} - \EE{\sumtT \inner{\bghat^{(n)}_t,N\cmp_t} }
\\&\quad\notag=
    \E \Bigg[ \sumtT \inner{\bghat_t^{(n)}, \wt^{(n)}-N\cmp_t}
+ \sum_{i\neq n}\underbrace{\sumtT \inner{\bghat_t^{(i)}, \wt^{(i)}}}_{=R_T^{\cA_i}(\bs{0})}\Bigg] \\
&\quad\notag\le
    \EE{ R_T^{\cA_n}(N\cmp_1,\ldots,N\cmp_T)} + (N-1)G\epsilon~. %
\end{align*}
The proof is concluded by observing $n$ was arbitrary, so we may choose the best $n\in[N]$.
\end{proof}

This theorem illustrates how uniform sampling acts as a rudimentary coordination mechanism for comparator-adaptive algorithms. Because each learner individually adapts to an arbitrary comparator, our combined strategy, on average, can be compared against the performance of any single algorithm in the collection. As we will see in the following sections, this mechanism enables us to ``tune'' hyperparameters on-the-fly, allowing us to adapt to unknown problem parameters such as the 
switching number $S_T$ without any prior knowledge.

It is important to note that, unlike the iterate-adding approach of \citet{cutkosky2019combining}, this method for combining guarantees can ends up increasing the comparator norm by a factor of $N$. Therefore, in the context of \OCO, the uniform sampling approach is a generally a worse option than the iterate-adding approach. Indeed, this trick is primarily of interest in settings where the
iterate-adding adding approach would complicate feedback, such as partial-feedback settings where the iterate adding approach makes individual loss estimation difficult.

\section{\SecLinearBandits}%
\label{sec:linear}
Recall from the previous section that the key property that we need to apply the uniform sampling strategy (\Cref{alg:uniform-interface}) is that the base algorithms $\cA_i$ guarantee a comparator-adaptive property, $R_T(\zeros)= \cO(1)$.
To obtain guarantees of this form in the bandit setting, we use a scale/direction decomposition introduced by \citet{van2020comparator}, which adapts 
a well-known reduction from the full-information setting to bandit feedback \citep{cutkosky2018black}.
The idea is to decompose the action $\wt$ into a \emph{scale} $v_t\in\R$
and a \emph{direction} $\bt\in\cB=\Set{\bs{x}\in\R^d: \norm{\bs{x}}\le 1}$, which will be learned by separate
online learning algorithms. In particular, observe that
for $M=\max_t\norm{\cmp_t}$ we have
\begin{align*}
&R_T(\cmp_1 \ldots \cmp_T) = \sumtT \inner{\Ell_t, \wt-\cmp_t}\\
&=\sumtT \inner{\Ell_t,\bt}v_t -\inner{\Ell_t,\cmp_t}\\
&=
\underbrace{\sumtT \inner{\Ell_t,\bt}(v_t -M)}_{=:R_T^{\cA_\cV}(M)} + M\underbrace{\sumtT  \inner{\Ell_t,\bt-\cmp_t/M}}_{=:R_T^{\cA_\cB}(\cmp_1/M,\ldots,\cmp_T/M)}\,
\end{align*}
where we've added and subtracted $M\inner{\Ell_t,\bt}$ in the last line.
Therefore, in order to ensure the required comparator-adaptive property $R_T(\zeros) = \cO(1)$ it suffices to provide a scale learner algorithm $\cA_\cV$ which guarantees $R_T^{\cA_\cV}(0)=\mathcal{O}(1)$ against the losses $v\mapsto \inner{\Ell_t,\bt}v$. This is a 1-dimensional \emph{static regret} problem for Online Linear Optimization with full information feedback, so we can apply any of the existing comparator-adaptive algorithms to ensure this property \citep{orabona2016coin,cutkosky2018black,mhammedi2020lipschitz,jacobsen2022parameter}. 
The following proposition then shows that 
when applied with the uniform sampling mechanism from the previous section, 
we are able to obtain a dynamic regret guarantee which matches that of the best 
algorithm in the set.

\begin{algorithm}[t!]
\begin{algorithmic}
\STATE \textbf{Input:} Scale learner $\cA_{\cV}$, direction learner $\cA_{\cB}$
\For{$t=1,\ldots,T$}{
  \STATE Get scale prediction $\vt$ from $\cA_{\cV}$ and direction prediction $\bt$ from
  $\cA_{\cB}$
  \STATE Play $\wt=\vt\bt$
  \STATE Observe $\inner{\wt,\Ell_{t}}$ and compute $g_t =
  \inner{\wt,\Ell_{t}}/\vt$
  \STATE Send $\inner{\Ell_t,\bt} = \inner{\Ell_t,\wt}/v_t$ to $\cA_\cB$ as the feedback on round $t$
  \STATE Send $v\mapsto \inner{\Ell_t, \bt} v$ to $\cA_{\cV}$ as the $t^\mathrm{th}$ loss
}
\end{algorithmic}
\caption{Scale / Direction Decomposition \citep{van2020comparator}}\label{alg:scale-dir-decomposition}
\end{algorithm}

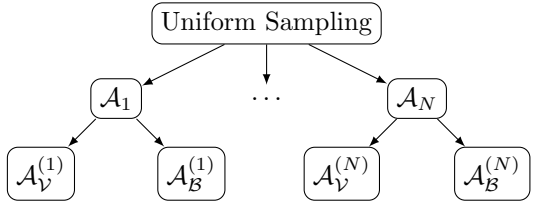
\begin{figure}[t!]
    \centering
\begin{tikzpicture}[
  every node/.style={draw, rounded corners, align=center, minimum height=1.0em},
  level 1/.style={sibling distance=20mm, level distance=10mm},
  level 2/.style={sibling distance=20mm, level distance=10mm},
  edge from parent/.style={draw, -latex}
]

\node {Uniform Sampling}
  child {node {$\cA_1$}
    child {node {$\cA_\cV^{(1)}$}}
    child {node {$\cA_\cB^{(1)}$}}
  }
  child {node[draw=none] {$\ldots$}}
  child {node {$\cA_N$}
    child {node {$\cA_\cV^{(N)}$}}
    child {node {$\cA_\cB^{(N)}$}}
  };

\end{tikzpicture}
    \caption{Illustration of how the Uniform Sampling interface interacts with each base algorithm $\cA_i$. Each base algorithm internally applies the direction and scale decomposition, using its own hyperparameters.}
    \label{fig:schema}
\end{figure}

\begin{minipage}{\columnwidth}
\begin{restatable}{proposition}{UniformCombining}\label{prop:uniform-combining-unconstrained}
Pick $N$ base algorithms implementing \Cref{alg:scale-dir-decomposition}, where each scale learner $\cA_\cV^{(n)}$ is over $\R$ and each direction learner $\cA_\cB^{(n)}$ is over $\cB=\Set{\bs{x}\in\R^d: \norm{\bs{x}}\le 1}$.
Suppose that for any sequence of $G$-Lipschitz linear losses $f_1,\ldots,f_T$ over $\R$ and for any $\epsilon > 0$, the regret of $\cA_{\cV}^{(n)}$ satisfies $R_{T}^{\cA_{\cV}^{(n)}}(0)=\sumtT \ft(\vt^{(n)})-\ft(0)\le G\epsilon~$.
Then, for any $n\in[N]$ and any sequence $\cmp_{1},\ldots,\cmp_{T}$ in $\R^{d}$, \Cref{alg:uniform-interface} guarantees
  \begin{align*}
    \EE{R_{T}(\cmp_{1},\ldots,\cmp_T)} 
    &\le
       (N-1)G\epsilon
       +
       \EE{R_{T}^{\cA_{\cV}^{(n)}}\!\!(MN)}\\
       &\quad
       +MN\EE{ R_{T}^{\cA_{\cB}^{(n)}}\!\!\brac{\frac{\cmp_{1}}{M},\ldots,\frac{\cmp_{T}}{M}}},
  \end{align*}
  where  $M=\max_{t}\norm{\cmp_{t}}$.
\end{restatable}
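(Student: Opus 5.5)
The plan is to reduce to \Cref{prop:uniform-combining-olo} and then split the regret of the chosen base algorithm with the scale/direction decomposition. First I need to know what each $\cA_n$ actually sees. Under \Cref{alg:uniform-interface} with bandit feedback, $\cA_n$ receives $\inner{\Ell_t,\wt}\Indicator{I_t=n}$. Since $\wt=\wt^{(n)}$ when $I_t=n$, this equals $\inner{\bghat_t^{(n)},\wt^{(n)}}$ with $\bghat_t^{(n)}=\Indicator{I_t=n}\Ell_t$. So $\cA_n$ runs \Cref{alg:scale-dir-decomposition} exactly on the linear losses $\bghat_t^{(n)}$. It can recover $\inner{\bghat_t^{(n)},\bt^{(n)}}=\inner{\bghat_t^{(n)},\wt^{(n)}}/\vt^{(n)}$, because it knows whether it was sampled, and it feeds this to both of its sub-learners.

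Next I verify the hypothesis of \Cref{prop:uniform-combining-olo}, namely $R_T^{\cA_i}(\zeros)\le G\epsilon$. By definition,
\[
\sumtT \inner{\bghat_t^{(i)},\wt^{(i)}}=\sumtT \inner{\bghat_t^{(i)},\bt^{(i)}}\vt^{(i)}=R_T^{\cA_\cV^{(i)}}(0).
\]
The scale learner's losses $v\mapsto \inner{\bghat_t^{(i)},\bt^{(i)}}v$ are $G$-Lipschitz. This holds because $\norm{\bt^{(i)}}\le 1$ and $\norm{\bghat_t^{(i)}}\le\norm{\Ell_t}\le G$, where I take $G$ as the bound on $\norm{\Ell_t}$ implicit in the statement. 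The assumption on $\cA_\cV^{(i)}$ then gives $R_T^{\cA_i}(\zeros)\le G\epsilon$ pathwise, for every realization of $I_1,\dots,I_T$. The bound holds for any (possibly adaptive) loss sequence, so it also holds in expectation.

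\Cref{prop:uniform-combining-olo} then yields, for any $n$,
\[
\EE{R_T(\cmp_1,\ldots,\cmp_T)}\le (N-1)G\epsilon+\EE{R_T^{\cA_n}(N\cmp_1,\ldots,N\cmp_T)}.
\]
Its proof only uses $\wt=\sum_i\Indicator{I_t=i}\wt^{(i)}$, the identity $\E[N\Indicator{I_t=n}\mid\mathcal{F}_{t-1}]=1$, and the zero-comparator bound. The first two are unchanged here, since $I_t$ is drawn independently of $\mathcal{F}_{t-1}$ and $\Ell_t$ is $\mathcal{F}_{t-1}$-measurable. The comparators are oblivious, so $\cmp_t$ is deterministic.

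Finally I apply the decomposition from \Cref{sec:linear} to $R_T^{\cA_n}$ with comparators $N\cmp_t$, whose maximum norm is $MN$. Adding and subtracting $MN\inner{\bghat_t^{(n)},\bt^{(n)}}$ gives
\[
R_T^{\cA_n}(N\cmp_1,\ldots,N\cmp_T)=R_T^{\cA_\cV^{(n)}}(MN)+MN\,R_T^{\cA_\cB^{(n)}}(\cmp_1/M,\ldots,\cmp_T/M).
\]
Taking expectations gives the claim. The case $M=0$ is trivial, since then only the $(N-1)G\epsilon$ and scale terms appear.

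The main obstacle I expect is the measurability bookkeeping. The losses may be adaptive, and $I_t$ must be independent of $\Ell_t$ given the past for the conditional-expectation step to go through. I would make this explicit by noting that the adversary picks $\Ell_t$ before $I_t$ is drawn. Everything else is algebra.
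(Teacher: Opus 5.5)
Your proposal is correct and follows the paper's own proof. You apply \Cref{prop:uniform-combining-olo} for the chosen $n$, then add and subtract $MN\inner{\Ellhat_t^{(n)},\bt^{(n)}}$ to split $R_T^{\cA_n}(N\cmp_1,\ldots,N\cmp_T)$ into $R_T^{\cA_{\cV}^{(n)}}(MN)$ plus $MN$ times the direction learner's regret against $\cmp_t/M$. Your explicit check that $R_T^{\cA_i}(\zeros)=R_T^{\cA_{\cV}^{(i)}}(0)\le G\epsilon$ holds pathwise, which is what justifies invoking \Cref{prop:uniform-combining-olo}, is a step the paper leaves implicit, and you handle it correctly.
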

\end{minipage}
\begin{proof}
The result is immediate by applying
\Cref{prop:uniform-combining-olo}
followed by
the scale / direction decomposition.
For any $n\in[N]$, we have
\allowdisplaybreaks
\begin{align*}
    &\EE{R_{T}(\cmp_{1},\ldots,\cmp_{T})}
    =
      \EE{\sumtT \inner{\Ell_{t},\wt-\cmp_{t}}}\\
    &\le
       \EE{\sumtT \inner{\Ellhat_{t}^{(n)},\bt^{(n)}v_t^{(n)}-N\cmp_{t}}}+(N-1)G\epsilon\\
      &=
      \E\bigg[\sumtT \inner{\Ellhat_{t}^{(n)},\bt^{(n)}}v_{t}^{(n)}-\inner{\Ellhat_{t}^{(n)}, \bt^{(n)}}NM\bigg]\\
      &\qquad  
      +\E\bigg[\sumtT \inner{\Ellhat_{t}^{(n)}, \bt^{(n)}}NM-\inner{\Ellhat_{t}^{(n)}, N\cmp_{t}}\bigg]\\
      &\qquad
      +(N-1)G\epsilon\\
    &=
   (N-1)G\epsilon+ \EE{R_{T}^{\cA_{\cV}^{(n)}}\!\!\!(MN)}\\&\qquad+MN \EE{R_{T}^{\cA_{\cB}^{(n)}}\!\!\!\brac{\frac{\cmp_{1}}{M},\ldots,\frac{\cmp_{T}}{M}}}~.\qedhere
  \end{align*}
\end{proof}

As observed above, the key insight of \cite{van2020comparator} is that the feedback received by the scale learner is actually full-information feedback; indeed, a scale learner's
loss function
$v\mapsto\inner{\Ell_t,\bt} v$ can be precisely recovered from $\inner{\Ell_t,\wt}=\inner{\Ell_t,\bt}v_t$ by dividing it by the scale $v_t$,
so no tricky loss estimation is required for the scale learner.
Moreover, the scale learner faces a \emph{static} regret problem, so overall to meet the condition of \Cref{prop:uniform-combining-unconstrained},
it will suffice to apply any comparator-adaptive \OCO\ algorithm for static regret as the scale learner, leading to
a guarantee of the form $R_T^{\cA_\cV^{(n)}}(NM)=\widetilde{\mathcal{O}}(NM\sqrt{T})$. In what follows we choose
\citet[Algorithm 4]{jacobsen2022parameter} for concreteness, but really any algorithm guaranteeing $R_T(0)=\cO(1)$ would suffice.

All that remains is to provide a
bandit algorithm which can guarantee the optimal $\sqrt{dS_T T}$ switching regret on the unit-ball when tuned optimally. 
For this, we can apply \citet[Algorithm 2]{luo2022corralling}, which guarantees
for any sequence $\tilde\cmp_1,\ldots,\tilde\cmp_T$ in $\cB=\Set{\bs{x}\in\R^d:\norm{\bs{x}}\le 1}$ that
\begin{align*}
    \EE{R_T^{\cA_\cB}(\tilde\cmp_1,\ldots,\tilde\cmp_T)}=\tilde \cO\brac{\sqrt{d(1+ S_T)T}}, 
\end{align*}
so long as the hyperparameters are set optimally in terms of $S_T=\sum_{t=2}^T\mathbb{I}\Set{\tilde\cmp_t-\tilde\cmp_\tmm}$, $T$, and $d$. 
In particular, their approach combines instances of linear bandit algorithms 
using a carefully-designed multi-scale meta-algorithm, and obtaining the 
desired $\tilde\cO(\sqrt{d(1+S_T) T})$ bound requires that the hyperparameters \emph{of the 
meta-algorithm} be set using knowledge of $S_T$, which is of course unknown in practice. However, 
using our \Cref{prop:uniform-combining-unconstrained} we can effectively 
\emph{guess} $S_T$ up to a constant factor by running an 
instance of their algorithm parameterized using $\hat S_n$ for each $\hat S_n$ in a geometrically-spaced grid of candidates
\begin{align}\cS = \Set{2^n\minOp 2T, n=0,1,\ldots}\cup\Set{0}.\label{eq:grid-linear-bandits}\end{align}
Note that even when $\hat S_i$ is misspecified, the scale/direction decomposition above still ensures that the overall algorithm 
has the comparator-adaptive property $R_T(\zeros)=\cO(1)$ due to the  multiplicative dependence between the regret of the unit-ball learner and $M=\max_t\norm{\cmp_t}$
in \Cref{prop:uniform-combining-unconstrained}. Moreover, since $S_T\in[0,T-1]$ for any sequence of comparators, we know that there is always 
an $\hat S_n\in\cS$ satisfying $\hat S_n\le S_T\le 2\hat S_n$ while also maintaining $N=\cO(\Log{T})$.
Therefore, we are able to leverage these facts to arrive at the following parameter-free dynamic regret guarantee.

\begin{restatable}{theorem}{OptimalSwitchingContinuousArm}\label{thm:optimal-switching-continuous-arm}
  Let $\cS$ be defined as in \Cref{eq:grid-linear-bandits}.
  For all $i\in \Set{0,\ldots,\abs{\cS}}$, let $\cA_i$ be an instance of \Cref{alg:scale-dir-decomposition} such that the direction learner $\cA_{\cB}^{(i)}$ 
  be characterized by \citet[Theorem 1]{luo2022corralling} with $S=\hat S_i\in \cS$, and 
  let $\cA_{\cV}^{(i)}$ be an instance of \citet[Algorithm 4]{jacobsen2022parameter}.
  Then for any sequence $\cmp_{1},\ldots,\cmp_{T}$ in $\R^d$,
  \Cref{alg:uniform-interface} guarantees
  \begin{align*}
    \EE{R_{T}(\cmp_{1},\ldots,\cmp_{T})}
    &=
      \tilde\cO\brac{MG\sqrt{N(1+S_T)T}}.
  \end{align*}
  where $S_T=\sum_{t=2}^T\mathbb{I}\Set{\cmp_t\ne\cmp_\tmm}$, $M=\max_t\norm{\cmp_t}$, and $N=\lceil\log_2(T)+1\rceil$.
\end{restatable}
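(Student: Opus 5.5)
We start from \Cref{prop:uniform-combining-unconstrained}: it holds for every $n\in[N]$, so we may pick a single well-tuned index and bound the three terms on its right-hand side separately. The number of instances is $N=|\cS|=\cO(\log T)$, since the grid $\cS$ from \Cref{eq:grid-linear-bandits} has at most $\lceil\log_2(T)+1\rceil$ distinct elements. Because $S_T\le T-1$, there is an index $n^\star$ whose guess $\hat S_{n^\star}$ is within a constant factor of $S_T$ and satisfies $\hat S_{n^\star}\ge S_T$. Concretely, take the smallest grid point at least $S_T$: use $\hat S=0$ when $S_T=0$, and otherwise $S_T\le\hat S_{n^\star}\le 2S_T$. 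We will apply the proposition with $n=n^\star$. Rounding the guess \emph{up} matters, because the guarantee of \citet[Theorem 1]{luo2022corralling} with parameter $S$ should hold for every comparator sequence with at most $S$ switches.

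Before bounding the three terms, we address the loss scale seen by each base learner. Base learner $n$ receives $\Ellhat_t^{(n)}=\Ell_t\Indicator{I_t=n}$, so the relevant linear losses are still $G$-Lipschitz. The scale learner's losses are $v\mapsto\inner{\Ellhat_t^{(n)},\bt^{(n)}}v$, with $|\inner{\Ellhat_t^{(n)},\bt^{(n)}}|\le G$ because $\bt^{(n)}\in\cB$. The direction learner faces a linear bandit on $\cB$ with losses of norm at most $G$. On rounds where $I_t\ne n$ it observes the value $0$, which is still valid bandit feedback for the zero loss vector. The rescaled comparators $\cmp_t/M$ lie in $\cB$ and have exactly $S_T$ switches, since division by the constant $M$ preserves equality. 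The adversary is adaptive in the losses but oblivious in the comparators, which matches the assumptions of the cited guarantee.

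The three terms are then bounded as follows.
\begin{itemize}
\item The first term is $(N-1)G\epsilon=\cO(G\log T)$, choosing $\epsilon=\cO(1)$ (or scaling $\epsilon$ with $M$ is unnecessary since the bound is additive).
\item For the second term, we use the guarantee of \citet[Algorithm 4]{jacobsen2022parameter}, namely $R_T(u)=\tilde\cO\bigl(G\epsilon+G|u|\sqrt{T\log(|u|T/\epsilon)}\bigr)$. At $u=MN$ this gives $\tilde\cO(MNG\sqrt T)$.
\item For the third term, \citet[Theorem 1]{luo2022corralling} tuned with $S=\hat S_{n^\star}\le 2S_T$ gives $\tilde\cO\bigl(G\sqrt{d(1+S_T)T}\bigr)$ after scaling by the loss magnitude $G$. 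Multiplying by $MN$ yields $\tilde\cO\bigl(MNG\sqrt{d(1+S_T)T}\bigr)$.
\end{itemize}
Summing the three bounds and absorbing $N=\cO(\log T)$ and $d$ into $\tilde\cO$ gives the stated $\tilde\cO\bigl(MG\sqrt{N(1+S_T)T}\bigr)$ form. I am treating $d$ as suppressed in the statement's notation; the displayed bound only tracks $M$, $G$, $N$, $S_T$ and $T$.

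The main obstacle is justifying the direction learner's guarantee when it receives only the thinned feedback $\Ellhat^{(n)}_t$. The expectation in \Cref{prop:uniform-combining-unconstrained} is over $I_t$ together with the learner's own randomness. The cited bound is an expected bound against an adaptive adversary. We therefore need the filtration argument: conditioned on $\mathcal{F}_{t-1}$, the pair of loss vector and random mask forms an adaptive adversary acting on learner $n$, independent of that learner's internal randomness on round $t$. Checking this, and verifying that \citet{luo2022corralling}'s bound degrades gracefully (monotone in $S$) when the parameter overestimates the true number of switches, are the two points needing care. Everything else is bookkeeping of logarithmic factors.
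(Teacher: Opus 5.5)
Your proposal is correct and follows essentially the paper's route. You invoke \Cref{prop:uniform-combining-unconstrained} at a well-chosen grid index, bound the scale term with the comparator-adaptive guarantee of \citet{jacobsen2022parameter}, and bound the direction term with \citet{luo2022corralling} tuned at a grid point within a factor $2$ of $S_T$.

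The differences from the paper are minor. First, you round $S_T$ up to the grid ($S_T\le\hat S\le 2S_T$), whereas the paper rounds down ($\hat S_i\le S_T\le 2\hat S_i$). Yours is the safer choice if the cited bound is stated only for sequences with at most $S$ switches. Second, you bound the scale term by $MNG\sqrt{T}$ instead of using Jensen and the $1/N$ thinning of $\Ellhat_t^{(n)}$ to get $MG\sqrt{NT}$. That costs only a logarithmic factor, since the direction term already carries a full factor $N$.

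One correction to your last step: $d$ is not logarithmic and cannot be absorbed into $\tilde\cO$, which hides only constant and logarithmic factors. What you (and the paper) actually prove is $\tilde\cO\bigl(MG\sqrt{d(1+S_T)T}\bigr)$, which is exactly where the paper's proof ends. The $\sqrt{N}$ in the displayed statement is itself polylogarithmic and is evidently a slip for $\sqrt{d}$; the paper calls $\sqrt{d(1+S_T)T}$ the optimal rate.
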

\begin{proof}
  By \Cref{prop:uniform-combining-unconstrained}, for any
  $n\in[\abs{\cS}]$, we have
  \begin{align*}
    \EE{R_{T}(\cmp_{1},\ldots,\cmp_{T})}
    &\le
      \E\Big[ (N-1)G\epsilon+R_{T}^{\cA_{\cV}^{(i)}}(MN) \\
      &\quad+ MN R_{T}^{\cA_\cB^{(i)}}\brac{\frac{\cmp_{1}}{M},\ldots,\frac{\cmp_{T}}{M}}\Big]~.
  \end{align*}
  where $M=\max_{t}\norm{\cmp_{t}}$.
  The regret guarantee \citep[Theorem 1]{jacobsen2022parameter} of any of the $\cA_{\cV}^{(i)}$  yields
  \begin{align*}
    \EE{R_{T}^{\cA_{\cV}^{(i)}}(NM)}
    &=\tilde O\brac{ MN\E\sbrac{\sqrt{\sum_{t}\inner{\hat \Ell_{t}^{(i)},\bt^{(i_{t})}}^{2}}}}\\
    &=
      \tilde O\brac{ MG\sqrt{NT}},
  \end{align*}
  where we've used obliviousness of the comparator sequence and the facts that
  $\norm{\bt^{(i_{t})}}\le 1$ for all $t$ and $\bghat_t^{(i)}=\Ell_t\mathbb{I}\Set{I_t=i}$, so that
  $\EE{\sqrt{\sumtT \|\hat\Ell_{t}^{(i)}\|_{*}^{2}}}\le \sqrt{\sumtT \EE{\norm{\Ell_{t}}^{2}_{*}\Indicator{i_{t}=i}}}\le G\sqrt{T/N}$
  via Jensen's inequality.
  Moreover, letting $\tilde\cmp_{t}=\frac{\cmp_{t}}{M}$, there is a $\hat S_i\in \cS$ such that 
  $\hat S_i\le S_T\le 2\hat S_i$, so applying \citet[Algorithm 2]{luo2022corralling} 
  with hyperparameters set according to \citet[Theorem 1]{luo2022corralling} with $S=\hat S_i$
  yields
  \begin{align*}
    \EE{R_{T}^{\cA_{\cB}^{(i)}}(\tilde \cmp_{1},\ldots,\tilde\cmp_{T})}
    &=
    \tilde \cO\brac{G\sqrt{d (1+S_T) T}},
  \end{align*}
  so, combining with the previous display and the regret decomposition from \Cref{prop:uniform-combining-unconstrained} 
  \begin{align*}
    \EE{R_{T}(\cmp_{1},\ldots,\cmp_{T})}
    &=
    \tilde O\Bigg(N\epsilon G + GM\sqrt{NT}\\
    &\qquad+MN G\sqrt{d(1+S_T)T}\Bigg)\\
    &=
      \tilde O\brac{GM\sqrt{d(1+S_T) T}}~,
  \end{align*}
    where we've observed that $N=\Ceil{\log(T)+1}$ and hidden poly-logarithmic factors.
\end{proof}

We note that despite the simplicity of the argument, this result is in fact the first $\tilde \cO(\sqrt{(1+S_T) T})$ dynamic regret bound 
obtained without prior knowledge of $S_T$ in non-stochastic linear bandit settings. One might worry that the result above 
might contradict existing impossibility results for $\sqrt{(1+S_T) T}$  dynamic regret against adaptive adversaries.
Indeed, \citet{marinov2021pareto} show that
$S_T\sqrt{T}$ dynamic regret can be forced under very general conditions 
when bandit feedback is involved. 
Our result 
is avoiding this lower bound in two ways. First, the construction of \citet{marinov2021pareto}
uses \emph{both} an adaptive comparator and an adaptive loss sequence, while 
our approach explicitly leverages obliviousness of the comparator sequence.
Second, the lower bound construction 
in that work is for finite policy classes, which does not capture the unconstrained linear bandit problem. Intuitively, the reason why this distinction is important is that in finite policy class settings one can construct instances where choosing the comparator action is strictly better than any other action, leading to a positive gap between the loss of the learner and comparator. As such, lower bounds can be constructed by quantifying how
much exploration is needed to identify the optimal action, since any rounds of exploration will lead to a non-negative addition to the total regret.
However, in an unconstrained setting, such gaps generally do not hold uniformly, 
and it is difficult in general to bound how much negative regret the learner may accumulate by making risky bets. For instance, it is possible for the learner to generate arbitrarily large negative regret on any round in which $\wt$ is in the same half-plane as $-\Ell_t$ by taking $\norm{\wt}$ to be large.

\subsection{\SecObliviousness}%
\label{sec:oblivious}

While our approach in the previous section successfully achieves a parameter-free $\tilde \cO(\sqrt{S_T T})$ dynamic regret bound
against an adaptive loss sequence, the argument crucially relies on obliviousness \emph{of the comparator sequence}.
Most notably, \Cref{prop:uniform-combining-unconstrained} decomposes the 
expected dynamic regret in terms of the expected dynamic regret of an algorithm 
applied on the unit-ball, $M\E[R_T^{\cA_\cB}(\cmp_1/M,\ldots,\cmp_T/M)]$, but this
required moving the expectation passed $M=\max_t\norm{\cmp_t}$. For this, $M$ should at least be chosen 
obliviously, particularly so when we only have an \emph{expected} regret guarantee for the direction learner. 

More generally, the essential difficulty is that 
any time one hopes to leverage a 
bound on
$\E[\|\Ellhat_t\|_*^2]$, a comparator-adaptive guarantee
will require obliviousness of the comparator sequence 
to bypass the $M\sqrt{1+S_T}$ part of the bound.
Indeed, suppose we could directly reduce the problem to OLO 
on the loss estimates $\Ellhat_t$ to obtain
\begin{align*}
    \EE{R_T(\cmp_{1},\ldots,\cmp_T)}\lessapprox\EE{M\sqrt{(1+S_T)\sumtT \|\Ellhat_t\|^2_*}}.
\end{align*}
From here, we might hope to apply Jensen's inequality to arrive at
\begin{align*}
    \EE{R_T(\cmp_{1},\ldots,\cmp_T)}\lessapprox M\sqrt{(1+S_T)\sumtT \EE{\|\Ellhat_t\|_*^2}},
\end{align*}
but this is only valid if $M\sqrt{1+S_T}$ is deterministic. This issue could potentially 
be overcome by leveraging a loss estimate satisfying an \emph{almost-sure bound} such as $\|\Ellhat_t\|=\tilde \cO(d\norm{\Ell_t})$,
which would obviate the need for an oblivious comparator assumption entirely.
We conjecture that this is indeed possible in unconstrained settings, though requires a different construction 
than our approach in the previous section.
We leave this as an important  direction for follow-up work.

Alternatively, one might hope to avoid these considerations by  applying Cauchy-Schwarz inequality to get
$\sqrt{\EE{M^2(1+S_T)}\E[\sumtT \|\Ellhat_t\|^2_*}]$. However, this bound 
fails to be meaningful in general because it 
decouples the comparator from the loss sequence.
Indeed, even with perfect loss estimates $\Ellhat_t=\Ell_t$ it is easy to define events such that the decoupled bound 
becomes vacuous even for $S_T=0$ while 
$\sqrt{\EE{M^2(1+S_T)\sumtT \|\Ellhat_t\|^2_*}}$ remains sublinear.

\section{\SecDiscussion}

In this paper, we propose a remarkably simple approach to tuning the hyperparameters of unconstrained bandit algorithms on-the-fly, enabling adaptivity to problem parameters that cannot be directly observed or estimated, such as the switching number of the comparator sequence. This leads to the first optimal parameter-free dynamic regret bound for unconstrained linear bandits in adversarial settings. We expect that this technique can be applied in other contexts involving hard-to-estimate quantities, though it it crucially relies on the scale/direction decomposition discussed in \Cref{sec:linear}. Extending this approach to more general constrained action sets remains non-trivial, as irregular geometries can significantly complicate the decoupling of scale and direction.

A natural question is whether it is possible to recover guarantees that scale with the standard path length $P_T$ in linear bandits, instead of the more coarse switching guarantees obtained in this work. One possible approach might be to directly optimize in the action space---\eg{}, via mirror descent—and then construct a distribution $p_t$ such that $\E_{p_t}[\w] = \wt$, playing $\w \sim p_t$. However, the variance control of such sampling methods remain challenging  in the unconstrained setting.
Moreover, as mentioned in \Cref{sec:oblivious}, we suspect that it is possible in the unconstrained linear bandit setting to remove the oblivious comparator assumption as well, resulting in parameter-free dynamic regret guarantees against a
fully-adaptive adversary, though we leave this as a direction for future investigation.

Lastly, to obtain the desired $\cO(\sqrt{d(1+S_T) T})$ bound, we had to resort to running a collection of $\cO(\log(T))$ instances of the multi-scale corralling algorithm of \citet{luo2022corralling}, which itself ends up requiring $\cO(T\log T)$ per-round computation. We leave as an open question whether the optimal $S_T$ dependence can still be obtained using the usual $\cO(d\log T)$ per-round computation obtained in the full-information setting.

\subsubsection*{Acknowledgements}
NCB and AJ acknowledge the financial support from the EU Horizon CL4-2022-HUMAN-02 research and innovation action under grant agreement 101120237, project ELIAS (European Lighthouse of AI for Sustainability).

\bibliography{biblio}
\bibliographystyle{icml-2026-nourl}

\end{document}